\documentclass{article} 
\usepackage{style,times}
\usepackage{booktabs}

\usepackage{amsmath,amsfonts,bm}

\def\eqref#1{equation~\ref{#1}}

\def\1{\bm{1}}

\DeclareMathAlphabet{\mathsfit}{\encodingdefault}{\sfdefault}{m}{sl}
\SetMathAlphabet{\mathsfit}{bold}{\encodingdefault}{\sfdefault}{bx}{n}

\usepackage{wrapfig}
\usepackage{hyperref}
\usepackage{url}
\usepackage{algorithm}
\usepackage{algorithmic} 
\usepackage{amsmath}     
\usepackage{amsfonts}    
\usepackage{amssymb}
\usepackage{amsthm}
\usepackage{multirow}
\theoremstyle{plain}  \newtheorem{proposition}{Proposition}
\theoremstyle{remark} 
\usepackage{graphicx}
\usepackage{enumitem}
\usepackage{xcolor}
\usepackage{authblk}

\usepackage{tcolorbox}
\tcbuselibrary{skins, breakable}

\newtcolorbox{promptbox}[1]{
  colback=gray!5, colframe=black!60, boxrule=0.5pt, arc=1pt,
  breakable, fonttitle=\bfseries\small, title={#1},
  left=6pt, right=6pt, top=4pt, bottom=4pt
}
\newtcolorbox{modbox}[1]{
  colback=blue!4, colframe=blue!45!black, boxrule=0.4pt, arc=1pt,
  breakable, fonttitle=\bfseries\footnotesize, title={#1},
  left=5pt, right=5pt, top=3pt, bottom=3pt
}
\newcommand{\slot}[1]{\textcolor{blue!45!black}{\textbf{\{\,#1\,\}}}}

\vspace{-10pt}
\title{Beyond Prediction: A Component-Level Study of Concept Recovery in In-Context Learning}

\author{
    Xufeng Liu \quad Fang Wan \quad Yi Liu\thanks{Correspondence to yi.liu.4@stonybrook.edu} \\
    \small  Stony Brook University
}

\finalcopy 
\begin{document}

\maketitle
\vspace{-10pt}
\begin{abstract}
In-context learning (ICL) enables language models to make accurate predictions from demonstrations, but prediction success alone does not reveal the extent to which the underlying concept has been recovered. Existing evaluations typically assess recovery at the level of the whole concept, making it difficult to identify which components are recovered and to separate model limitations from ambiguity in the demonstrations. We introduce a \textbf{component-level framework for concept recovery} that decomposes concepts into their defining components and uses controlled structural interventions to evaluate both component-level and whole-concept recovery under identifiable demonstrations. Across four instruction-tuned LLMs and three composition forms, we find that models often achieve high prediction accuracy despite poor whole-concept recovery, with systematic over-specification of relevant attributes and level constraints. Individual structural components are generally insufficient for whole-concept recovery, whereas jointly specifying complementary components, particularly relevant attributes and their level constraints, substantially improves recovery. However, this improvement is fragile: a single plausible but unnecessary level constraint can substantially disrupt recovery. Finally, joint supervision of recovery and implicit prediction substantially improves recovery while preserving prediction performance, with the learned capability transferring across composition forms. Overall, our component-level analysis provides a more diagnostic view of what models recover from demonstrations than the existing holistic view.
\end{abstract}

\section{Introduction}
\vspace{-10pt}
Large language models (LLMs) have achieved strong performance across a wide range of tasks, from academic reasoning~\citep{guo2025deepseek,singh2025openai} to coding~\citep{el2025competitive,comanici2025gemini} and scientific applications~\citep{boiko2023autonomous,gottweis2026accelerating}. Yet, successful behavior alone does not necessarily reveal whether a model has recovered the underlying structure that supports its predictions. Understanding what models infer from task-specific evidence therefore remains an important challenge for LLM interpretability~\citep{luo2024understanding,singh2024rethinking,huben2024sparse,chen2025reasoning}.

In-context learning (ICL) provides a natural setting to study this question. By conditioning on a small set of labeled demonstrations, LLMs can adapt their predictions to the task specified by the demonstrations on subsequent queries~\citep{brown2020language,min2022rethinking}. A natural interpretation is that the demonstrations provide information about an underlying task concept that can be inferred and applied to new queries. This raises a more direct question: \emph{when ICL succeeds, to what extent does the model recover the underlying concept from the demonstrations?} In particular, accurate query prediction does not necessarily imply faithful recovery of the concept that generates the labels. We study this prediction--recovery gap in controlled ICL settings where demonstrations are generated from predefined target concepts, allowing recovery to be evaluated against explicit ground truth.

Several lines of work have investigated the mechanisms underlying successful ICL~\citep{olsson2022context,dai2023can,pan2023context,agarwal2024many}. Related approaches to rule or hypothesis recovery, including MIRAGE~\citep{li2025mirage} and iterative hypothesis refinement~\citep{qiu2024phenomenal}, further examine whether models can explicitly infer rules from demonstrations. However, recovery in these settings is typically evaluated at the level of the recovered rule as a whole. Such holistic evaluation can determine whether a recovered concept is correct, but provides limited information about \emph{which concept-defining components are recovered, missed, or incorrectly included}. It also does not by itself establish whether the demonstrations uniquely identify the target concept, making it difficult to distinguish limitations in model recovery from ambiguity in the evidence. These limitations leave open a more diagnostic question: \emph{what does a model recover when it fails to recover the concept as a whole?}

\begin{wrapfigure}{r}{0.6\textwidth}
\vspace{-10pt}
\centering
\includegraphics[width=0.6\textwidth]{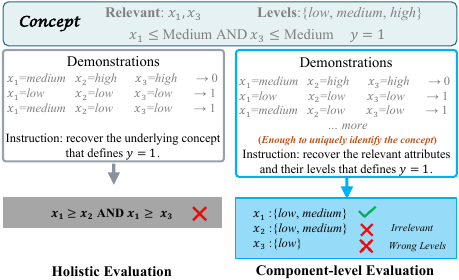}
\vspace{-10pt}
\caption{\textbf{Holistic versus component-level evaluation of concept recovery.} Given a target concept, we generate demonstrations from the concept and provide them to an LLM for concept recovery. \emph{Holistic evaluation} assesses whether the recovered concept is correct as a whole, without revealing which concept-defining components are recovered correctly or incorrectly. We instead adopt a \emph{component-level view}, using demonstrations sufficient to uniquely identify the target concept and evaluating recovery at the level of individual concept-defining components.}
\label{fig:motivation}
\vspace{-10pt}
\end{wrapfigure}
To address this question, we move beyond holistic evaluation to a \textbf{component-level framework for concept recovery}, as illustrated in Figure~\ref{fig:motivation}. We represent each target concept through its constituent components and evaluate recovery separately at the component and whole-concept levels. We first ensure that the demonstrations are sufficient to uniquely identify the target concept, separating model-side recovery errors from ambiguity in the evidence. We then use controlled interventions to test whether individual or jointly specified concept-defining components support whole-concept recovery, and whether this recovery remains robust when a plausible but unnecessary structural element is introduced. Finally, we use supervised training to test whether the same structured recovery capability can be learned while retaining implicit prediction.

Our experiments reveal several empirical patterns in concept recovery. (1) Models often achieve high query prediction accuracy despite poor whole-concept recovery. Component-level analysis further reveals systematic over-specification: models tend to include irrelevant attributes and overly broad level constraints. (2) Individual structural components are generally insufficient for whole-concept recovery, whereas jointly specifying complementary concept-defining components, particularly the relevant attributes and their level constraints, substantially improves recovery. However, recovery remains fragile: a single plausible but unnecessary level constraint can substantially disrupt it. (3) Structured concept recovery is learnable. Inference-time reasoning alone does not reliably improve recovery, whereas joint training with recovery and implicit-prediction supervision substantially improves recovery while preserving implicit-prediction performance, and the learned capability transfers across composition forms. 

We summarize our contributions as follows:
\begin{itemize}
\item \textbf{A component-level characterization of concept recovery.} We decompose concept recovery into constituent structural components and evaluate which components are correctly recovered, missed, or incorrectly included, going beyond a holistic assessment of whether the recovered concept is correct.
\item \textbf{A controlled analysis of structural information for recovery.} We use identifiable demonstrations and controlled interventions to characterize how individual structural components and their joint specification support whole-concept recovery, and how plausible but unnecessary information can interfere with recovery.
\item \textbf{A learnable concept-recovery capability.} We show that structured concept recovery can be learned through supervised training, improving recovery while preserving implicit prediction and transferring across composition forms.
\end{itemize}
\vspace{-5pt}

\section{Background and Related Work}\label{sec:background_related_work}
\vspace{-5pt}
\subsection{Problem Formulation}\label{sec:icl_background}
\vspace{-5pt}

We consider an in-context learning task governed by an underlying concept $c$, which specifies the input–output relationship that defines the task. We use ``concept’’ as an abstract term for this latent, task-defining object, which may take different forms depending on the task, such as a rule, function, program, or other structured specification. Given a concept $c$, demonstrations are generated by applying $c$ to a set of inputs,
\[
D=\{(x^{(i)},y^{(i)})\}_{i=1}^{H},
\qquad y^{(i)}=c(x^{(i)}).
\]
A query $x^q$ from the same task is likewise associated with $y^q=c(x^q)$. Given the demonstrations and a query, an ICL model can directly produce the query output
$
\hat y^q=\mathrm{LLM}_{\mathrm{Im}}(D,x^q).
$
We refer to this as \textbf{implicit prediction}: the model produces the query label without being explicitly required to state the underlying concept.

We can instead evaluate whether the model can \textbf{recover the underlying concept} itself from the demonstrations,
$
\hat c=\mathrm{LLM}_{\mathrm{Rec}}(D).
$
Thus, prediction uses the demonstrations to answer a new query, whereas concept recovery uses the demonstrations to infer the latent object $c$ that defines the task. This distinction allows us to ask our central question: \emph{when implicit prediction succeeds, to what extent does the model recover the underlying concept?}

\subsection{Related Work}
\vspace{-5pt}
Prior work has studied how demonstrations convey task information from several perspectives. Some work examines how models infer hidden tasks or task structure from demonstrations~\citep{xie2021explanation,hahn2023theory,pan2023context}. Other work studies how properties of demonstrations affect successful prediction, including labels, input distributions, format, similarity, ordering, and retrieval~\citep{zhao2021calibrate,min2022rethinking,liu2022makes,lu2022fantastically,rubin2022learning}. Related work also considers more explicit forms of task knowledge, including natural-language task descriptions, instructions, and rules~\citep{honovich2023instruction,zhu2023large}. Another line of work studies how task- or function-level information is induced and represented in ICL~\citep{olsson2022context,akata2026context}. These works motivate viewing ICL as more than direct label prediction, but do not directly address the structured recovery of the underlying task-defining concept considered here. A more detailed discussion of related work is provided in Appendix~\ref{appendix:related_work}.

More directly related to concept recovery are works that explicitly ask models to infer an underlying rule or hypothesis from observations. \citet{qiu2024phenomenal} study inductive reasoning through iterative hypothesis refinement, where models generate and refine textual hypotheses before applying them to new instances. MIRAGE~\citep{li2025mirage} explicitly studies the distinction between rule induction and example inference, showing that models can predict unseen examples correctly without necessarily recovering the correct generalized rule. These studies highlight the distinction between successful prediction and recovery of the underlying rule.

However, these lines of work do not directly address two aspects that are central to our setting. First, they typically evaluate the recovered object as a whole, providing limited information about which concept-defining components are correctly recovered, missed, or incorrectly included, particularly when demonstrations contain plausible but irrelevant structure. Second, they do not explicitly control whether the demonstrations uniquely identify the target concept. When multiple concepts are consistent with the same demonstrations, failure to recover $c$ may reflect ambiguity in the evidence rather than a limitation of the model’s recovery capability. We therefore ask a more specific evaluation question: \emph{when $D$ uniquely identifies $c$, which components of the target concept are recovered, missed, or incorrectly included?}

\section{Component-Level Concept Recovery}\label{sec:framework}
\vspace{-5pt}
Our framework addresses the prediction–recovery gap through a \textbf{structured view of concept recovery}, as illustrated in Figure~\ref{fig:framework}. First, we represent the underlying concept through its constituent components and evaluate recovery at the component level (Sec.\ref{sec:measurement}), establishing \emph{what} is being recovered and how recovery errors can be localized. Second, building on this representation, we use controlled interventions on concept-defining information and plausible but unnecessary structural information to characterize which combinations of structural information support recovery (Sec.\ref{sec:localizing}). Finally, we use supervised learning to explicitly teach the same structured recovery capability while retaining the original implicit-prediction objective (Sec.\ref{sec:repair}). These stages move from \emph{measuring} concept recovery, to \emph{localizing} its structural dependencies, to \emph{learning} the recovery capability itself.

\begin{figure}[H]
    \centering
    \includegraphics[width=\linewidth]{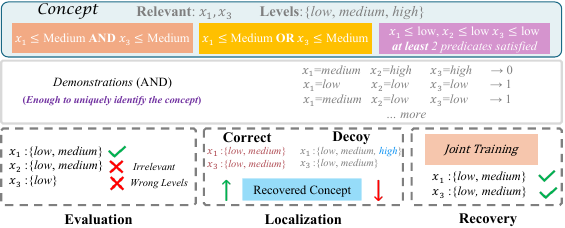}
    \vspace{-10pt}
    \caption{\textbf{Overview of our component-level framework for concept recovery.} The top panel illustrates the three composition forms, while the lower panels show the evaluation, localization, and recovery stages. We first decompose recovered concepts into defining components and evaluate at the component level. We then use controlled interventions on structural information to localize their effects on recovery. Finally, we use supervised learning to finetune the model to recover the structured concept while preserving its implicit-prediction capability. }
    \label{fig:framework}
    \vspace{-5pt}
\end{figure}

\subsection{Structured Concept Representation}\label{sec:measurement}
\vspace{-5pt}

\textbf{From Holistic to Structured Representation.} We represent each target concept as a structured specification composed of concept-defining components. This representation makes the underlying concept explicit and provides a common basis for evaluating recovery at both the component and concept levels.

We instantiate our framework in a STAGGER-style controlled concept-learning environment~\citep{schlimmer1986incremental}, which provides a suitable setting for our framework because concepts are defined through multiple interacting attributes rather than represented only as an input–output mapping. This structure allows us to study whether a model recovers \emph{which attributes matter}, \emph{what constraints they impose}, and \emph{how these constraints are combined}, while retaining explicit ground truth for each component.

Each concept is generated from $K$ predicate slots and a known compositional form $g$, of which only a subset may impose effective constraints. Each effective predicate is defined by an input attribute together with a constraint on its possible levels. For an attribute $j$, we represent its constraint by a set of satisfying levels $L_j\subseteq\mathcal{V}$, where $\mathcal{V}=\{\textit{low},\textit{medium},\textit{high}\}$. The compositional form $g$ specifies how these predicates are combined: conjunction (\textsc{AND}) requires all predicates to be satisfied, disjunction (\textsc{OR}) requires at least one, and $m$-of-$K$ requires at least $m$ predicates to be satisfied. Thus, the concept-specific instantiation can be represented as
\begin{equation}
c_g=
\begin{cases}
(A,\{L_j\}_{j\in A}), & g\in\{\mathrm{AND},\mathrm{OR}\},\\
(A,\{L_j\}_{j\in A},m), & g=\mathrm{m\text{-}of\text{-}K},
\end{cases}
\label{eq:task_spec}
\end{equation}
where $A\subseteq\{1,\ldots,M\}$ denotes the indices of the effective relevant attributes, $\{L_j\}_{j\in A}$ specifies their level constraints, and $m$ specifies the threshold when applicable. We use $K$ to denote the nominal number of predicate slots; some predicate slots may impose no effective constraint, so $|A|\leq K$. Although $K$ can provide structural information for recovery, it is not included in $c_g$ because it specifies the nominal predicate capacity rather than an effective constraint on the input–output concept. The composition form $g$ is known and is not itself a recovery target. Concept recovery therefore targets the concept-defining specification $c_g$.

\textbf{Component-Level Concept Recovery.} The target concept determines the output of each demonstration through its concept-defining components, while each observation is represented over the full candidate attribute space. Thus, only a subset of the observed attributes belongs to the target concept, whereas the remaining attributes are irrelevant to the output. Across demonstrations, the correspondence between attribute values and outputs provides evidence about which attributes are relevant and what value constraints they impose. Details of concept construction and demonstration generation are provided in Appendix~\ref{app:task_construction}.

Given the ground-truth concept, we evaluate recovery at the level of its constituent components: the relevant attribute set $A$, the level constraint $L_j$ for each $j\in A$, and, for $m$-of-$K$ concepts, the threshold $m$. This allows us to distinguish components that are correctly recovered from those that are missed or incorrectly specified. This component-level measurement goes beyond whole-concept correctness, allowing us to characterize \emph{what} models recover and \emph{which} parts of the underlying concept remain unresolved.

\textbf{Identifiability of the Target Concept.} To separate recovery errors from ambiguity in the demonstrations, we require the demonstration set $D$ to uniquely identify the target concept within the admissible hypothesis class. Following the version-space formulation, $D$ is concept-identifying when the version space contains no behaviorally distinct hypothesis other than the target concept. We verify this condition at the deployed demonstration budgets by exhaustive enumeration and theoretical analysis; details are provided in Appendix~\ref{app:ident}.

\subsection{Localizing Information Dependencies in Concept Recovery}\label{sec:localizing}
\vspace{-5pt}
Component-level evaluation localizes recovery errors but does not determine whether explicitly providing the corresponding structural information improves recovery. We therefore use controlled information interventions to isolate how different structural components support concept recovery.

In each intervention, we fix the target concept and its demonstrations and vary only the ground-truth structural information explicitly provided to the recovery model. Let $S$ denote the set of structural information components provided in addition to the demonstrations. In our interventions, $S$ may specify $K$, $A$, $L$, $m$, or combinations thereof, including a plausible decoy constraint $\tilde L$. The recovery model produces $\hat c_S^{(i)}=
\mathrm{LLM}_{\mathrm{Rec}}\!\left(D^{(i)},S^{(i)}\right),$ with $c^{*(i)}$ denotes the corresponding ground-truth target concept for instance $i$. This formulation allows us to compare recovery under different information states while keeping the underlying task instance and recovery procedure fixed.

\textbf{Whole-Concept Recovery.} We evaluate whether the recovered specification is behaviorally equivalent to the target concept over the full input space:
\begin{equation}
\hat c_S^{(i)} \equiv c^{*(i)}
\quad\Longleftrightarrow\quad
\forall x\in\mathcal{X}_i,\;
\hat c_S^{(i)}(x)=c^{*(i)}(x),
\label{eq:concept_equivalence}
\end{equation}
where $\mathcal{X}_i$ denotes the input space of the $i$-th target concept. We evaluate this equivalence exhaustively when the input space is tractable; otherwise, we use a fixed deterministic subsample of at most $4096$ contexts.

The whole-concept recovery rate under structural information $S$ is defined as
\begin{equation}
R(S)=
\frac{1}{N}
\sum_{i=1}^{N}
\mathbb{I}\!\left[
\hat c_S^{(i)}\equiv c^{*(i)}
\right],
\label{eq:concept_recovery}
\end{equation}
where $N$ is the number of target concept instances. Thus, $R(S)$ measures whether the recovered specification defines the same input–output concept as the target, complementing component-level metrics that evaluate individual elements of the recovered specification.

\textbf{Decoy-Level Interference.} We next test the robustness of recovery to plausible but unnecessary structural information. For each relevant attribute $j\in A$, we augment its ground-truth level constraint with one additional level $\tilde{\ell}_j\notin L_j$, yielding $\tilde L_j=L_j\cup{\tilde{\ell}_j}$. We compare whole-concept recovery under the correct and augmented structural information, namely $R(A,L)$ and $R(A,L,\tilde L)$. The target concepts, demonstrations, recovery procedure, and evaluation remain fixed across conditions, so the comparison isolates the effect of adding plausible but unnecessary level information.

\subsection{Learning Structured Concept Recovery}\label{sec:repair}
\vspace{-5pt}
The preceding framework defines concept recovery as structured prediction over the concept representation in Eq.~(\ref{eq:task_spec}) and identifies the structural information that can support this recovery. We next ask whether this recovery capability can be explicitly learned. We therefore supervise the structured concept representation while retaining the original implicit-prediction objective.

\textbf{Joint Recovery and Prediction Supervision.} We treat concept recovery and implicit prediction as complementary objectives. Recovery supervision provides demonstrations and requires the model to explicitly state the underlying concept, including its attributes, level constraints, and, when applicable, threshold $m$. Implicit-prediction supervision retains the original ICL objective by providing demonstrations and a query and supervising the corresponding output. For each training concept, we construct one recovery example and two balanced prediction examples, one for each label, and jointly optimize them with the standard token-level cross-entropy objective. This objective directly supervises the structured recovery target while preserving the original implicit-prediction objective.

\textbf{Training and Transfer.} We train on $m$-of-$K$ concepts because this family exercises the full structured specification $(A,L,m)$. We evaluate the trained model on held-out concepts within the training family and on concepts from unseen \textsc{AND} and \textsc{OR} composition forms. This evaluates whether the learned recovery capability transfers beyond the composition form observed during training.

\section{Experimental Studies} \label{sec:exp}
\vspace{-5pt}

We first introduce the experimental setup, including the target concepts, evaluated models, and evaluation metrics (Sec.\ref{sec:exp_setup}). We then examine concept recovery at the component level and characterize systematic recovery errors (Sec.\ref{exp:measure}). Next, we use controlled interventions to test whether constraining individual components or jointly specifying complementary components enables whole-concept recovery, and whether successful recovery is robust to plausible but unnecessary structural information (Sec.\ref{exp:localize}). Finally, we evaluate whether structured concept recovery can be learned through supervised training while retaining implicit prediction (Sec.\ref{exp:repair}).

\subsection{Experimental Setup} \label{sec:exp_setup}
\vspace{-5pt}

\textbf{Concepts and Data.} We instantiate $48$ target concepts from the STAGGER-style environment, covering three composition forms (\textsc{AND}, \textsc{OR}, and $m$-of-$K$), $2$ nominal arities ($K\in\{3,4\}$), and $8$ generator seeds. For each concept, we construct balanced demonstration sets with $H\in\{100,200\}$ labeled examples and a balanced held-out query set $Q$ containing $32$ probes per concept and evaluation condition.

\textbf{Models.} We evaluate four instruction-tuned LLMs: Qwen2.5-72B-Instruct, Llama-3.1-70B-Instruct, Gemma-2-27B-IT, and Mistral-Small-24B-Instruct-2501. All models use publicly released checkpoints and are evaluated under the same prompting protocol with greedy decoding ($T=0$).

\textbf{Evaluation Metrics.} We evaluate two aspects: \emph{concept application} and \emph{concept recovery}. \textbf{Concept Application} measures prediction accuracy on the balanced held-out query set $Q$. We denote the accuracy of \emph{implicit prediction} by $Q_{\mathrm{imp}}$ and that of \emph{explicit prediction}—applying the recovered concept to the same queries—by $Q_{\mathrm{exp}}$. \textbf{Concept Recovery} measures whether the recovered concept matches the target concept. We use the whole-concept recovery rate $R(S)$ defined in Eq.~(\ref{eq:concept_recovery}), and $\mathrm{Acc}_A$, $\mathrm{Acc}_L$, and $\mathrm{Acc}_m$ for component-level recovery of relevant attributes, level constraints, and the threshold, respectively. $Q_{\mathrm{exp}}$ measures how accurately the recovered concept is applied on held-out queries, whereas $R(S)$ measures whole-concept recovery over the input space and is therefore a stricter measure of whether the underlying concept has been recovered.

\textbf{Experimental Protocol.} All experiments use the same modular prompt templates, with only the relevant prompt components varied across experimental conditions. The prompt templates and inference-time variants are provided in Appendix~\ref{app:prompt}.

\subsection{Component-Level Evaluation of Concept Recovery}\label{exp:measure}
\vspace{-5pt}

We first examine whether accurate implicit prediction is accompanied by faithful explicit concept recovery. We compare the model’s implicit predictions with predictions obtained by executing its recovered concept on the same held-out queries. As shown in Table~\ref{result:implicit_vs_explicit}, implicit prediction achieves $0.67$–$0.83$ accuracy, whereas the recovered concepts achieve only $0.50$–$0.63$. Thus, models can make substantially more accurate predictions than the concepts they explicitly recover from the same demonstrations.

We next ask what underlies this prediction–recovery gap. Our component-level evaluation shows that the errors are systematic rather than random. Exact recovery of the relevant-attribute set $A$ is uniformly $0.00$, while models predict $6.56$–$7.50$ attributes as relevant on average. Since each input contains $M=K+4\in\{7,8\}$ attributes, with $K\in\{3,4\}$ denoting the nominal number of predicate slots, models tend to include nearly the entire candidate attribute space rather than isolate the attributes that define the target concept. Among correctly identified relevant attributes, level-constraint mismatches are also biased toward \emph{Over} errors, while threshold errors in $m$-of-$K$ concepts are predominantly \emph{Too High} (Appendix~\ref{app:level_error}; Appendix~\ref{app:threshold_error}). Together, these results indicate a consistent tendency to \textbf{over-specify the target concept}: models tend to include additional attributes, satisfying levels, or required predicates.

\vspace{-20pt}
\begin{table}[ht]
\caption{Concept application and recovery under demonstrations alone. $Q_{\mathrm{imp}}$ and $Q_{\mathrm{exp}}$ denote prediction accuracy on the held-out query set $Q$ using implicit prediction and the recovered concept, respectively. $\mathrm{Acc}_A$, $\mathrm{Acc}_{L|A}$, and $\mathrm{Acc}_m$ measure recovery of relevant attributes, level constraints conditional on correctly identified attributes, and the composition threshold, respectively; $|\hat A|$ reports the number of predicted relevant attributes.}
\vspace{2pt}
\label{result:implicit_vs_explicit}
\centering
\small
\begin{tabular}{llcc|cccc}
\toprule
& & \multicolumn{2}{c|}{Concept Application} & \multicolumn{4}{c}{Concept Recovery} \\
Form & Model
& $Q_{\mathrm{imp}}$
& $Q_{\mathrm{exp}}$
& $\mathrm{Acc}_A$ 
& $|\hat A|$
& $\mathrm{Acc}_{L|A}$ 
& $\mathrm{Acc}_m$  \\
\midrule
\multirow{4}{*}{\textsc{AND}}
 & Qwen2.5-72B    & 0.83 & 0.63 & 0.00 & 7.16 & 0.70 & -- \\
 & Llama-3.1-70B  & 0.73 & 0.56 & 0.00 & 7.31 & 0.71 & -- \\
 & Gemma-2-27B    & 0.72 & 0.58 & 0.00 & 7.47 & 0.38 & -- \\
 & Mistral-24B    & 0.77 & 0.52 & 0.00 & 7.47 & 0.62 & -- \\
\midrule
\multirow{4}{*}{\textsc{OR}}
 & Qwen2.5-72B    & 0.80 & 0.50 & 0.00 & 6.91 & 0.05 & -- \\
 & Llama-3.1-70B  & 0.74 & 0.51 & 0.00 & 7.50 & 0.15 & -- \\
 & Gemma-2-27B    & 0.77 & 0.50 & 0.00 & 7.06 & 0.05 & -- \\
 & Mistral-24B    & 0.69 & 0.51 & 0.00 & 7.47 & 0.24 & -- \\
\midrule
\multirow{4}{*}{$m$-of-$K$}
 & Qwen2.5-72B    & 0.73 & 0.53 & 0.00 & 6.56 & 0.39 & 0.00 \\
 & Llama-3.1-70B  & 0.71 & 0.51 & 0.00 & 7.06 & 0.29 & 0.03 \\
 & Gemma-2-27B    & 0.67 & 0.51 & 0.00 & 6.97 & 0.21 & 0.03 \\
 & Mistral-24B    & 0.68 & 0.51 & 0.00 & 7.47 & 0.32 & 0.03 \\
\bottomrule
\end{tabular}
\vspace{-10pt}
\end{table}

\begin{wraptable}{r}{0.6\textwidth} 
  \vspace{-20pt} 
\caption{Paired comparison of implicit prediction and predictions from the model’s stated rule. $b$ counts queries correct under implicit prediction but incorrect under the stated rule, while $c$ counts the reverse pattern. \emph{Cond. Loss} measures the fraction of implicitly correct queries overturned by the stated rule. McNemar’s exact test compares the discordant counts. $\downarrow$ denotes lower-is-better.}
\vspace{2pt}
\label{tab:within_item}
\centering
\small
\begin{tabular}{lcccc}
\toprule
Model & $b$ & $c$ & Cond. Loss $\downarrow$ & McNemar $p$ \\
\midrule
Qwen2.5-72B & 84 & 18 & 0.33 & $2.3{\times}10^{-11}$ \\
Llama-3.1-70B & 76 & 33 & 0.32 & $4.7{\times}10^{-5}$ \\
Gemma-2-27B & 86 & 34 & 0.39 & $2.3{\times}10^{-6}$ \\
Mistral-Small-24B & 90 & 22 & 0.39 & $6.0{\times}10^{-11}$ \\
\bottomrule
\end{tabular}
\vspace{-10pt}
\end{wraptable}
A paired comparison on the same held-out queries further tests this dissociation, avoiding differences due to independently sampled test sets. Across models, the recovered rule overturns $0.32$–$0.39$ of predictions that are correct under implicit prediction, while the reverse pattern is consistently less frequent ($c<b$). McNemar’s exact test rejects symmetry between the discordant prediction counts for all models ($p<10^{-4}$). This paired analysis therefore shows that the prediction–recovery gap is not driven simply by different query samples: the recovered rules systematically overturn a substantial fraction of predictions that the models otherwise get correct.

We then rule out general difficulty in applying an explicit concept: directly providing the target concept yields substantially higher held-out accuracy (Appendix~\ref{app:addtion_apply}). We also examine whether the component-level pattern extends beyond the controlled environment. On \textsc{MONK-1}, recovered concepts tend to include more attributes than the ground-truth relevant-attribute set, while analogous attribute-count patterns are observed on Mushroom (Appendix~\ref{app:other_dataset}).

\textbf{Takeaway.} Accurate implicit prediction does not imply faithful explicit concept recovery. \textbf{Component-level analysis reveals a systematic tendency to over-specify the target concept}, rather than merely failing to recover it at random. This pattern also persists beyond the controlled STAGGER-style environment.

\subsection{Structural Information Shapes Concept Recovery}\label{exp:localize}
\vspace{-5pt}
The component-level analysis reveals systematic over-specification in the recovered concepts. We therefore next ask whether providing the correct value of an individual structural component is sufficient to correct the corresponding error and recover the full concept. To test this, we keep the target concepts and demonstrations fixed while providing selected ground-truth structural components. Because the main error in $A$ is over-inclusion, we also provide $K$, the nominal number of predicate slots, to test whether constraining the size of the relevant attribute is sufficient for recovery.

Table~\ref{tab:staircase} compares recovery when individual structural components are provided separately or jointly. Providing $K$, $A$, $L$, or $m$ individually is generally insufficient for whole-concept recovery. The main exception is \textsc{AND}, where providing $A$ alone yields $0.19$–$0.84$ recovery. However, recovery remains near zero for \textsc{OR} and $m$-of-$K$. These results show that \textbf{identifying a component-level recovery error does not by itself provide sufficient information to recover the whole concept}. We next jointly provide $(A,L)$. This substantially improves recovery across the three concept families. Thus, \textbf{jointly specifying complementary defining components enables substantially more complete whole-concept recovery}, while $m$ provides the additional information required by $m$-of-$K$.

\begin{table}[t]
\vspace{-12pt}
\caption{Concept recovery under progressively revealed structural information. Each entry reports whole-concept recovery $R(S)$ after providing the indicated ground-truth information $S$ in addition to the demonstrations. $\varnothing$ denotes demonstrations alone; $K$, $A$, $L$, and $m$ denote the corresponding structural components.}
\vspace{2pt}
\label{tab:staircase}
\centering
\small
\begin{tabular}{llccccccc}
\toprule
Form & Model
& $R(\varnothing)$
& $R(K)$
& $R(A)$
& $R(L)$
& $R(m)$
& $R(A,L)$
& $R(A,L,m)$ \\
\midrule
\multirow{4}{*}{\textsc{AND}}
 & Qwen2.5-72B   & 0.06 & 0.09 & 0.81 & 0.00 & -- & 1.00 & -- \\
 & Llama-3.1-70B & 0.00 & 0.03 & 0.84 & 0.03 & -- & 1.00 & -- \\
 & Gemma-2-27B   & 0.00 & 0.00 & 0.19 & 0.06 & -- & 0.97 & -- \\
 & Mistral-24B   & 0.00 & 0.00 & 0.66 & 0.06 & -- & 1.00 & -- \\
\midrule
\multirow{4}{*}{\textsc{OR}}
 & Qwen2.5-72B   & 0.00 & 0.00 & 0.00 & 0.03 & -- & 0.94 & -- \\
 & Llama-3.1-70B & 0.00 & 0.00 & 0.00 & 0.09 & -- & 0.88 & -- \\
 & Gemma-2-27B   & 0.00 & 0.00 & 0.00 & 0.00 & -- & 0.81 & -- \\
 & Mistral-24B   & 0.00 & 0.00 & 0.00 & 0.00 & -- & 1.00 & -- \\
\midrule
\multirow{4}{*}{$m$-of-$K$}
 & Qwen2.5-72B   & 0.00 & 0.00 & 0.00 & 0.00 & 0.00 & 0.47 & 0.94 \\
 & Llama-3.1-70B & 0.00 & 0.00 & 0.00 & 0.00 & 0.00 & 0.94 & 1.00 \\
 & Gemma-2-27B   & 0.00 & 0.00 & 0.00 & 0.00 & 0.00 & 0.00 & 0.81 \\
 & Mistral-24B   & 0.00 & 0.00 & 0.00 & 0.00 & 0.00 & 1.00 & 1.00 \\
\bottomrule
\end{tabular}
\vspace{-10pt}
\end{table}

\begin{wraptable}{r}{0.48\textwidth}
  \vspace{-20pt} 
  \caption{Sensitivity of concept recovery to decoy-level information. We compare $R(A,L)$ with recovery after augmenting each level constraint with one plausible but unnecessary level, denoted by $\tilde L$.}
  \vspace{2pt}
  \label{tab:counterfactual_level}
  \centering
  \small
  \begin{tabular}{llcc}
\toprule
Form & Model & $R(A,L)$ & $R(A,\tilde{L})$  \\
\midrule
\multirow{4}{*}{\textsc{AND}}
 & Qwen2.5-72B    & 1.00 & 0.97  \\
 & Llama-3.1-70B  & 1.00 & 0.59  \\
 & Gemma-2-27B    & 0.97 & 0.31   \\
 & Mistral-24B    & 1.00 & 0.66  \\
\midrule
\multirow{4}{*}{\textsc{OR}}
 & Qwen2.5-72B    & 0.88 & 0.19  \\
 & Llama-3.1-70B  & 1.00 & 0.38  \\
 & Gemma-2-27B   & 0.81 & 0.00  \\
 & Mistral-24B    & 1.00 & 0.16\\
\midrule
\multirow{4}{*}{$m$-of-$K$}
 & Qwen2.5-72B    & 0.47 & 0.00   \\
 & Llama-3.1-70B  & 0.94 & 0.16  \\
 & Gemma-2-27B   & 0.00 & 0.00  \\
 & Mistral-24B    & 1.00 & 0.03  \\
\bottomrule
\end{tabular}
  \vspace{-10pt} 
\end{wraptable}
Having established that $(A,L)$ provides a condition under which recovery can be highly successful, we next test whether this recovery is robust to plausible but unnecessary structural information. We deliberately contaminate the successful specification by augmenting each ground-truth level constraint with one plausible but unnecessary level, while keeping $A$ fixed. This intervention is motivated by the systematic over-specification of level constraints observed in Sec.~\ref{exp:measure}. If the model can distinguish concept-defining information from unnecessary but plausible structure, the correct joint specification should remain sufficient despite this single decoy level.

As shown in Table~\ref{tab:counterfactual_level}, this single decoy-level intervention reduces whole-concept recovery in 11 of 12 model--form conditions. The degradation is especially pronounced for \textsc{OR} and $m$-of-$K$. These results show that \textbf{successful recovery remains sensitive to plausible but unnecessary structural information}, even when the correct joint specification is already provided.

\textbf{Takeaway.} Individual component constraints are generally insufficient for whole-concept recovery, whereas \textbf{jointly specifying complementary defining components enables substantially more complete recovery}. However, this successful recovery can be disrupted by a single plausible but unnecessary structural element.

\subsection{Learning Structured Concept Recovery}\label{exp:repair}
\vspace{-5pt}

\begin{wraptable}{r}{0.46\textwidth}
\vspace{-20pt}
\caption{Learning structured concept recovery on $m$-of-$K$ concepts. $Q_{\mathrm{imp}}$ denotes implicit prediction accuracy on held-out queries, while $R(\varnothing)$ measures whole-concept recovery from demonstrations alone. \emph{Recovery only} uses only structured recovery supervision, whereas \emph{Joint} combines recovery and implicit-prediction supervision.}
\vspace{2pt}
\label{tab:repair}
\centering
\small
\begin{tabular}{llcc}
\toprule
Form & Training
& $Q_{\mathrm{imp}}$
& $R(\varnothing)$ \\
\midrule
\multirow{3}{*}{$m$-of-$K$}
& No training      & 0.64 & 0.00 \\
& Recovery only    & 0.50 & 0.96 \\
& Joint            & 0.98 & 0.97 \\
\bottomrule
\end{tabular}
\vspace{-15pt}
\end{wraptable}

Before turning to supervised training, we first ask whether the prediction–recovery gap can be reduced through inference-time reasoning. As shown in Appendix~\ref{app:cot_selfrefine}, neither chain-of-thought reasoning~\citep{wei2022chain} nor iterative self-refinement~\citep{madaan2023self} consistently improves structured recovery, even when the identities of the relevant attributes are provided. We therefore ask whether this structured recovery capability can instead be learned directly through supervision. For supervised training, we use Qwen2.5-7B-Instruct with LoRA, intentionally using a smaller model than the 72B-scale models used for evaluation. Training details are provided in Appendix~\ref{app:training_details}.

We train on $m$-of-$K$ concepts, where recovery requires the full structured specification $(A,L,m)$. We compare three conditions: \emph{No training}, \emph{Recovery only}, which supervises only the structured concept representation, and \emph{Joint}, which combines recovery supervision with the original implicit-prediction objective. For each training concept, we construct one recovery example and two balanced prediction examples, one for each label, and jointly optimize the corresponding token-level cross-entropy objectives.

\begin{wraptable}{r}{0.46\textwidth}
\vspace{-20pt} 
\caption{Cross-family transfer after training on $m$-of-$K$ concepts. $Q_{\mathrm{imp}}$ denotes implicit prediction accuracy on held-out queries, while $R(\varnothing)$ measures whole-concept recovery from demonstrations alone.}
\vspace{2pt}
\label{tab:transfer}
\centering
\small
\begin{tabular}{llcc}
\toprule
Form & Training & $Q_{\mathrm{imp}}$ & $R(\varnothing)$\\
\midrule
\multirow{3}{*}{\textsc{and}}
& No training                        & 0.72& 0.00 \\
& Recovery only  & 0.50 & 0.95 \\
& Joint          & 0.97 & 0.94 \\
\midrule
\multirow{3}{*}{\textsc{or}}
 & No training                         & 0.66 & 0.00 \\
& Recovery only & 0.51 & 0.96 \\
 & Joint        & 0.99 & 0.98 \\
\bottomrule
\end{tabular}
\vspace{-15pt} 
\end{wraptable}

As shown in Table~\ref{tab:repair}, recovery-only training increases $R(\varnothing)$ from $0.00$ to $0.96$, but reduces $Q_{\mathrm{imp}}$ from $0.64$ to chance level. In contrast, joint training achieves both high recovery and high implicit-prediction accuracy, reaching $R(\varnothing)=0.97$ and $Q_{\mathrm{imp}}=0.98$. Thus, structured recovery can be learned without sacrificing implicit prediction when the two objectives are jointly supervised.

We further evaluate the performance on composition forms not observed during training. Table~\ref{tab:transfer} shows that recovery-only training retains high recovery on unseen \textsc{AND} and \textsc{OR} concepts, while joint training additionally preserves high implicit-prediction accuracy. 

\textbf{Takeaway.} Supervised recovery training substantially improves whole-concept recovery, but recovery-only supervision disrupts implicit prediction. \textbf{Joint recovery and prediction supervision learns structured recovery while preserving prediction capability}, and the learned recovery capability transfers to unseen composition forms.

\section{Conclusions}
\vspace{-5pt}
We introduced a component-level framework for concept recovery in in-context learning, moving beyond holistic evaluation by decomposing concepts into defining components and controlling the structural information available for recovery. Our findings show that successful prediction can coexist with poor concept recovery, and that component-level analysis provides a more diagnostic view of what models recover from demonstrations. Although our controlled concept-learning environment enables precise measurement and intervention, it leaves open whether the same patterns hold for more complex and open-ended concepts. Extending component-level recovery to natural-language rules, programs, and other structured task representations, and relating explicit recovery to the internal computations used for prediction, are promising directions for future work.

\newpage

\section*{AI use statement}

In this work, we used generative AI tools for improving the language and grammar of the manuscript. Additionally, we used generative AI tools for conducting our experiments, as LLMs are the primary subjects of our work. The reported experimental results are the direct outputs of interactions with these models, using structured prompts designed and refined solely by the authors. All research questions, methodologies, and experimental designs were developed by the authors. We have reviewed all AI-assisted work before submission, and we take responsibility for the final content of this work, including text, claims, or artifacts produced with the aid of generative AI.

\bibliography{reference,airs}
\bibliographystyle{style}

\clearpage
\appendix

\section{Detailed Comparison with Related Work}\label{appendix:related_work}
\vspace{-5pt}
This appendix provides a more detailed discussion of the related work summarized in Sec.~\ref{sec:background_related_work}, covering task and structure induction, behavioral studies of ICL, explicit task, rule, and function induction, task and function representations, and explicit concept recovery.

\subsection{Task and Structure Induction}\label{sec:related_work_task_structure}
\vspace{-5pt}
A line of work views demonstrations as evidence about an underlying task or latent structure. \citet{xie2021explanation} interpret ICL as implicit Bayesian inference over a latent task concept shared by the demonstrations, while \citet{hahn2023theory} study ICL as implicit structure induction and characterize how compositional structure can be inferred from examples. \citet{wang2023large} similarly model LLMs as latent-variable models and find empirical evidence that demonstrations induce latent task information. \citet{pan2023context} further distinguish task recognition from task learning, showing that models can recognize a task from demonstrations, even without ground-truth labels, while separately learning input--label mappings not available from pretraining.

These studies motivate viewing demonstrations as evidence about a latent task rather than merely as examples for direct prediction. Our work adopts this perspective but makes the recovery target explicit: instead of asking whether demonstrations induce latent task information that supports prediction, we ask which components of a predefined task-defining concept are recovered from demonstrations. This enables us to evaluate recovery at the component level and to verify whether the demonstrations uniquely identify the target concept.

\subsection{Behavioral Studies of ICL}\label{sec:related_work_behavior}
\vspace{-5pt}

A large body of work studies ICL from a behavioral perspective, examining which properties of demonstrations affect prediction. \citet{min2022rethinking} show that labels, input distributions, and sequence format contribute differently to ICL performance, while \citet{zhao2021calibrate} demonstrate that label distributions can substantially bias model predictions. Other work studies how the choice and ordering of demonstrations affect performance, including semantic similarity-based example selection, prompt ordering, and learned retrieval of useful examples~\citep{liu2022makes,lu2022fantastically,rubin2022learning}. Recent surveys organize these findings across different types of demonstration effects~\citep{dong2024survey}.

These studies show that the information contained in demonstrations can substantially affect prediction. Our work instead focuses on a different evaluation target: rather than measuring how demonstration properties affect query accuracy, we control the structural information provided for recovery to determine which parts of the underlying concept are recovered.

\subsection{Explicit Task, Rule, and Function Induction}\label{sec:related_work_explicit_induction}
\vspace{-5pt}
Another line of work explicitly studies whether models can infer task-level descriptions, rules, or functions from demonstrations. Instruction Induction~\citep{honovich2023instruction} prompts language models to generate natural-language instructions that describe the task underlying a set of input--output examples, and evaluates the induced instructions by their execution on new inputs. Hypotheses-to-Theories (HtT)~\citep{zhu2023large} similarly induces and verifies candidate rules from examples, building a rule library that can be used for subsequent reasoning. These works show that models can make task-level structure explicit rather than relying only on implicit prediction.

Related work studies the same question in more controlled function-learning settings. \citet{garg2022can} show that Transformers trained on synthetic data can learn unseen functions from in-context examples, including linear functions, sparse linear functions, neural networks, and decision trees. \citet{li2024language} further examine competing explanations of ICL and provide evidence that models can compose previously learned task components to implement novel tasks in context. More recent work studies function learning and function induction under controlled function classes and through internal mechanisms~\citep{akata2026context,ye2026function}.

Our setting shares this focus on an underlying structured object, but differs in the evaluation target: rather than evaluating induced descriptions through downstream execution, generalization, or task performance, we evaluate recovery against a target concept and decompose that concept into identifiable components. This allows us to distinguish components that are correctly recovered, missed, or incorrectly included.

\subsection{Task and Function Representations in ICL}\label{sec:related_work_representations}
\vspace{-5pt}
A complementary line of work studies how task- or function-level information induced by demonstrations is represented and used inside language models. Induction heads have been identified as an important mechanism for in-context pattern completion~\citep{olsson2022context}, while subsequent work shows that demonstrations can give rise to compact task- or function-level representations~\citep{hendel2023context,liu2023context,todd2024function}. More recent studies examine how structured task information is represented, including function induction, latent concept representations, and local task-specific representations~\citep{hong2026latent,zheng2026label}.

These studies provide evidence that task- or function-level information can be encoded in model representations and used to support prediction. However, representation-level evidence does not directly measure behavioral recovery of a predefined concept. In particular, the presence or decodability of task information in hidden states does not reveal which components of the target concept are recovered, whether irrelevant structure is also incorporated, or whether the available demonstrations uniquely identify the target. Our evaluation instead treats concept recovery as an explicit behavioral target and measures it at the level of individual concept-defining components.

\subsection{Explicit Concept Recovery and Prediction–Recovery Gaps}\label{sec:related_work_recovery}
\vspace{-5pt}
The most closely related work explicitly evaluates whether models recover the underlying rule or hypothesis from observed examples. \citet{qiu2024phenomenal} study inductive reasoning through iterative hypothesis refinement, where language models propose, select, and refine textual rules before applying them to new instances. By evaluating the intermediate hypotheses and their subsequent application, they identify a gap between rule induction and rule application. MIRAGE~\citep{li2025mirage} further separates rule induction from example inference in a controlled synthetic setting and shows that models can often infer unseen examples correctly without recovering the corresponding generalized rule. These results establish that successful prediction does not necessarily imply correct recovery of the underlying rule.

Existing evaluations primarily treat the recovered rule or hypothesis as a holistic object, making it difficult to determine which concept-defining components are correctly recovered, missed, or incorrectly included, particularly when observations contain plausible but irrelevant structure. Moreover, they do not explicitly verify whether the demonstrations uniquely identify the target concept, leaving recovery failure potentially confounded with ambiguity in the evidence. We instead construct demonstrations that uniquely identify a target concept and evaluate recovery at the level of explicit structural components. This formulation further enables controlled interventions on defining and irrelevant structural information, as well as supervised learning of recovery.

\section{Controlled Concept and Demonstration Generation}\label{app:task_construction}
\vspace{-5pt}
This section specifies how we instantiate the controlled concepts used in our experiments and generate the corresponding demonstrations and held-out probes. We first describe the STAGGER-style concept environment and the construction of individual target concepts in Sec.~\ref{app:stagger_des}. We then describe how demonstrations and held-out probes are sampled from each instantiated concept in Sec.~\ref{app:generating_demonstrations}.

\subsection{STAGGER-style Concept Environment}\label{app:stagger_des}
\vspace{-5pt}
We construct our controlled tasks using a STAGGER-style concept-learning environment~\citep{schlimmer1986incremental}, where each task is generated from a known concept with a finite and interpretable structure. This provides explicit ground truth for each concept-defining component.

\textbf{Concept representation.}
Following Sec.~\ref{sec:measurement}, we represent each target concept by its task-specific components:
\[
c_g=
\begin{cases}
(A,\{L_j\}_{j\in A}), & g\in\{\mathrm{AND},\mathrm{OR}\},\\
(A,\{L_j\}_{j\in A},m), & g=\mathrm{m\text{-}of\text{-}K},
\end{cases}
\]
where $g$ denotes the known composition form, $A$ is the effective relevant-attribute set, $L_j$ specifies the satisfying levels for attribute $j$, and $m$ is the composition threshold when applicable.

\textbf{Attribute information.} To provide a fixed set of distractor attributes, each input contains $M=K+4$ ordered attributes, where the nominal arity is $K\in\{3,4\}$ and thus $M\in\{7,8\}$. The generator selects $K$ attributes as predicate slots; the remaining four attributes are never selected as predicate slots. Each selected attribute is assigned one predicate, but some predicates may impose no effective constraint. We therefore define $A$ as the set of indices of selected attributes whose predicates affect the realized input--output mapping, so that $|A|\leq K$.

\textbf{Level constraints.}
Each attribute takes one of three ordered levels,
\[
\mathcal{V}=\{\textit{low},\textit{medium},\textit{high}\}.
\]
Identifying these levels with $\{0,1,2\}$ gives the input space $\mathcal{X}=\{0,1,2\}^{M}$. For each selected attribute, the generator samples an ordinal threshold predicate with an operator in $\{\leq,\geq\}$ and a threshold index in $\{0,1\}$. The resulting predicate space is
\[
\mathcal{Q}
=
\{(\leq,0),\,(\leq,1),\,(\geq,0),\,(\geq,1)\}.
\label{eq:clauses}
\]
These predicates correspond to the satisfying level sets
\[
(\leq,0)\mapsto\{\textit{low}\},\quad
(\leq,1)\mapsto\{\textit{low},\textit{medium}\},\quad
(\geq,0)\mapsto\mathcal{V},\quad
(\geq,1)\mapsto\{\textit{medium},\textit{high}\}.
\]
Thus, $A$ specifies \emph{which dimensions} participate in the realized concept, while $L_j$ specifies \emph{which levels} of each relevant dimension satisfy its predicate.

\textbf{Concept composition.} The $K$ predicates are combined using one of three known composition forms, $g\in\{\mathrm{AND},\mathrm{OR},\mathrm{m\text{-}of\text{-}K}\}$. Under \textsc{AND}, an input is positive if all predicates are satisfied; under \textsc{OR}, it is positive if at least one predicate is satisfied. Under $m$-of-$K$, an input is positive if at least $m$ predicates are satisfied, where $m=\left\lceil\frac{K+1}{2}\right\rceil$. Here, $m$ is the \emph{composition threshold}, which determines how many satisfied predicates are required for a positive output and is distinct from the threshold index used within an individual predicate.

\textbf{Non-effective predicates.} The predicate $(\geq,0)$ corresponds to $x_j\geq\textit{low}$ and is satisfied by every level in $\mathcal{V}$. It therefore imposes no effective constraint on the input. Under \textsc{OR}, such a predicate makes the concept always positive, while under $m$-of-$K$, it reduces the effective composition to an $(m-1)$-of-$(K-1)$ rule. We therefore reject concepts containing this predicate under \textsc{OR} and $m$-of-$K$ to preserve the intended nominal composition form and arity. Under \textsc{AND}, a non-effective conjunct does not change the resulting concept and is therefore allowed. Consequently, the nominal arity $K$ need not equal the effective number of relevant attributes $|A|$.

\textbf{Example.} Consider $K=3$, giving $M=7$ attributes. Suppose the generator selects $x_1,x_3,x_5$ and assigns 
\[x_1\geq\textit{medium},\qquad x_3\geq\textit{low},\qquad x_5\leq\textit{medium}.
\]
Under \textsc{AND}, the induced input--output mapping is
\[
c(x)= \mathbb{I}\!\left[
(x_1\geq\textit{medium})\land
(x_3\geq\textit{low})\land
(x_5\leq\textit{medium})\right].
\]
Here, the predicate on $x_3$ is non-effective and therefore does not affect the realized concept. The effective relevant-attribute set is consequently $A=\{1,5\}$, even though the nominal arity is $K=3$. The corresponding level constraints are
\[
L_1=\{\textit{medium},\textit{high}\},
\qquad
L_5=\{\textit{low},\textit{medium}\}.
\]

\textbf{Realized concepts.} Across $8$ generator seeds, $2$ nominal arities ($K\in\{3,4\}$), and $3$ composition forms, we instantiate $48$ target concepts in total, or $16$ concepts for each composition form. During concept recovery, the model is not given $K$; accordingly, the admissible recovery hypothesis class allows different cardinalities of the relevant-attribute set. We characterize this hypothesis class and its identifiability in Appendix~\ref{app:ident}.

\subsection{Generating Demonstrations and Held-out Probes}\label{app:generating_demonstrations}
\vspace{-5pt}
Given an instantiated concept specification $c_g$, we sample inputs and assign labels using the induced deterministic mapping $f_{c_g}$:
\[
y=f_{c_g}(x),\qquad x\in\mathcal{X}.
\]

\textbf{Demonstration generation.}
For each target concept, we construct demonstration sets with $H\in\{100,200\}$ examples. Each set is balanced between positive and negative examples, containing $H/2$ examples from each class, and the resulting examples are randomly shuffled to form
\[
D_H=\{(x^{(i)}, y^{(i)})\}_{i=1}^H.
\]
The same demonstration set is used as the evidence available to the model for both implicit prediction and concept recovery. Because the demonstrations are balanced, chance-level prediction accuracy is $0.50$.

\textbf{Held-out probes.} For each concept and evaluation condition, we construct a held-out query set $Q$ containing $32$ probes, with $16$ positive and $16$ negative examples. The probes are generated from the same underlying concept $c_g$ but are kept separate from the demonstrations. We sample probes without replacement from attribute configurations that do not occur in the demonstration set.

\textbf{Consistency across intervention conditions.} For comparisons across information-intervention conditions, we keep both the underlying concept and the demonstration set fixed, and vary only the information provided to the model before concept recovery. This ensures that differences in recovery can be attributed to the manipulated structural information rather than to differences in the sampled demonstrations.

\section{Demonstration Identifiability and Sufficiency}\label{app:ident}
\vspace{-5pt}
Before interpreting concept-recovery failures, we first determine whether the demonstrations uniquely identify the target concept. We formalize this question using the version space induced by the admissible hypothesis class. For a realized demonstration stream, we define the smallest prefix that leaves a single behaviorally distinct hypothesis as the \emph{concept-identification threshold}.

\subsection{Diagnostic Setup}\label{app:ident:setup}
\textbf{Admissible hypothesis class.} Following the concept construction in Appendix~\ref{app:stagger_des}, let $\mathcal{H}_{M,K',g}$ denote the set of valid concepts over $M$ attributes with nominal arity $K'$ and known composition form $g$. Because $g$ is specified to the model, we fix it to the target composition form. In contrast, the target nominal arity $K$ is not provided during concept recovery, so alternative hypotheses may have different nominal arities. We therefore define the admissible hypothesis class as
\[
\mathcal{H}_{\cup}=\bigcup_{K'=1}^{M}\mathcal{H}_{M,K',g}.
\]
This class contains all valid concepts that could be recovered under the information available to the model.

\textbf{Behavioral equivalence.} Different specifications in $\mathcal{H}_{\cup}$ may nevertheless induce the same input--output mapping. Since such specifications are indistinguishable from demonstrations, we identify hypotheses up to behavioral equivalence:
\[
c\equiv c'\quad\Longleftrightarrow\quad c(x)=c'(x)\ \ \forall x\in\mathcal{X}.
\label{eq:behavioral_equiv}
\]
Let $\overline{\mathcal{H}}_{\cup}$ denote the resulting set of behaviorally distinct equivalence classes.

\textbf{Version space.} Let $D_h=\{(x^{(i)},y^{(i)})\}_{i=1}^{h}$ denote the first $h$ demonstrations in a realized demonstration stream. The corresponding version space is the set of behaviorally distinct hypotheses consistent with all observed demonstrations:
\[
\overline{\mathcal{H}}_{\cup}(D_h)=\left\{[c]\in\overline{\mathcal{H}}_{\cup}\ :\
c(x^{(i)})=y^{(i)},\ \forall i\leq h\right\}.
\label{eq:version_space}
\]
A demonstration prefix is \emph{concept-identifying} if $\left|\overline{\mathcal{H}}_{\cup}(D_h)\right|=1$. 

\textbf{Stream--specific identification threshold.} For a target concept $c^\ast$ and a realized demonstration stream $D$, we define the concept-identification threshold as
\begin{equation}
H^{\ast}_{\mathrm{concept}}(c^{\ast};D)=\min\left\{h\ :\left |\overline{\mathcal{H}}_{\cup}(D_h)\right|=1\right\}.
\label{eq:hstar}
\end{equation}
This threshold measures how many demonstrations are required for this particular stream to uniquely identify the target concept within the admissible hypothesis class. It is therefore a property of the realized demonstration stream, rather than an intrinsic property of the target concept alone.

\subsection{General Sufficient-Budget Characterization}\label{app:ident:bound}
\vspace{-5pt}
The version-space formulation also provides a general sufficient condition for concept identification. Although the exact identification threshold can be computed in our finite controlled environment, the following characterization gives a model-independent upper bound for finite hypothesis classes. In particular, it relates the required demonstration budget to the number of behaviorally distinct alternatives and their separation from the target concept.

\textbf{Behavioral separation.} For a target concept $c^\ast$ and a behaviorally distinct competitor $c$, define the conditional disagreement rate for output $y\in\{0,1\}$ as
\[
\delta_y(c,c^\ast)
=
\Pr_{x\sim\mathcal{X}}
\left[
c(x)\neq c^\ast(x)
\;\middle|\;
c^\ast(x)=y
\right].
\label{eq:delta_y}
\]
We then define their balanced behavioral separation as
\[
\delta_{\mathrm{bal}}(c,c^\ast)=
\frac{\delta_0(c,c^\ast)+\delta_1(c,c^\ast)}{2}.
\label{eq:delta_bal}
\]
This quantity measures the average fraction of positive and negative contexts on which the competitor disagrees with the target. The balanced form matches the demonstration construction used in our experiments, which contains equal numbers of positive and negative examples.

Let
\[
\delta_{\min}(c^\ast)
=
\min_{\substack{[c]\in\overline{\mathcal H}_{\cup}\\
[c]\neq[c^\ast]}}
\delta_{\mathrm{bal}}(c,c^\ast)
\label{eq:delta_min}
\]
denote the minimum balanced behavioral separation between the target and any behaviorally distinct competitor.

\begin{proposition}[Sufficient demonstration budget]
Let $\overline{\mathcal H}_{\cup}$ be a finite set of behaviorally distinct hypotheses containing the target concept $c^\ast$. Suppose that demonstrations are sampled independently within each output class, with $H/2$ positive and $H/2$ negative examples, and that examples within each output class are sampled uniformly from the corresponding input subset. Then
\begin{equation}
\Pr\!\left[
\left|\overline{\mathcal H}_{\cup}(D_H)\right|>1
\right]
\leq
\left(
|\overline{\mathcal H}_{\cup}|-1
\right)
(1-\delta_{\min})^H.
\label{eq:union_bound}
\end{equation}
Consequently, for any $\eta\in(0,1)$, it is sufficient to use
\[
H_{\mathrm{suff}}(\eta)=
\left\lceil
\frac{
\log\!\left(|\overline{\mathcal H}_{\cup}|-1\right)
+
\log(1/\eta)
}{
-\log(1-\delta_{\min})
}
\right\rceil
\]
demonstrations to identify the target concept with probability at least $1-\eta$.
\end{proposition}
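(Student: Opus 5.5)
The plan is a per-competitor survival bound followed by a union bound over the finitely many behaviorally distinct competitors. First I note that the target class $[c^\ast]$ always lies in $\overline{\mathcal H}_{\cup}(D_H)$, since every demonstration is labeled by $c^\ast$. Hence the event $\left|\overline{\mathcal H}_{\cup}(D_H)\right|>1$ is exactly the event that some competitor $[c]\neq[c^\ast]$ is consistent with all $H$ demonstrations. By the union bound,
\[
\Pr\!\left[\left|\overline{\mathcal H}_{\cup}(D_H)\right|>1\right]\le\sum_{[c]\neq[c^\ast]}\Pr\left[c \text{ consistent with } D_H\right].
\]
Because behavioral equivalence is defined pointwise on $\mathcal X$, consistency does not depend on the chosen representative. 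So it suffices to bound each summand by $(1-\delta_{\min})^H$; there are $|\overline{\mathcal H}_{\cup}|-1$ summands.

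Fix a competitor $c$. Under the sampling assumption, each positive demonstration is drawn independently and uniformly from $\{x: c^\ast(x)=1\}$. The probability that such a draw is labeled consistently by $c$ is therefore exactly $1-\delta_1(c,c^\ast)$, because $\delta_y$ is defined through uniform $x$ conditioned on $c^\ast(x)=y$. The same holds for negative draws with $\delta_0$. Independence across the $H/2$ draws in each class gives
\[
\Pr\left[c \text{ survives}\right]=(1-\delta_0)^{H/2}(1-\delta_1)^{H/2}=\Bigl(\sqrt{(1-\delta_0)(1-\delta_1)}\Bigr)^{H}.
\]

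The key step is converting the two class-specific rates into the balanced separation. By AM--GM, $\sqrt{(1-\delta_0)(1-\delta_1)}\le \tfrac{(1-\delta_0)+(1-\delta_1)}{2}=1-\delta_{\mathrm{bal}}(c,c^\ast)$. Since $\delta_{\mathrm{bal}}(c,c^\ast)\ge\delta_{\min}$ and both factors lie in $[0,1]$, the survival probability is at most $(1-\delta_{\min})^H$. Summing over competitors gives \eqref{eq:union_bound}.

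For the budget statement, I require $(|\overline{\mathcal H}_{\cup}|-1)(1-\delta_{\min})^H\le\eta$. Taking logarithms and dividing by $-\log(1-\delta_{\min})>0$ (valid when $0<\delta_{\min}<1$) gives $H\ge \frac{\log(|\overline{\mathcal H}_{\cup}|-1)+\log(1/\eta)}{-\log(1-\delta_{\min})}$. This holds for $H=H_{\mathrm{suff}}(\eta)$ by the ceiling.

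Two edge cases need a remark. If $\delta_{\min}=1$, the bound is $0$ for any $H\ge1$. Also, $\delta_{\min}>0$ holds automatically: a behaviorally distinct $c$ disagrees with $c^\ast$ somewhere, so at least one of $\delta_0,\delta_1$ is positive. If $|\overline{\mathcal H}_{\cup}|=1$, the statement is trivial.

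I expect the main obstacle to be the bookkeeping of the sampling model rather than the inequality. The exact product formula requires sampling with replacement, which the stated independence assumption provides. The budget $H/2$ per class also implicitly requires $H$ to be even; for odd $H$ I would simply read the exponents as the actual class counts $H_0+H_1=H$ and apply weighted AM--GM.
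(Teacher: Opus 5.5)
Your proposal is correct and follows the paper's proof essentially step for step: the exact per-competitor survival probability $(1-\delta_1)^{H/2}(1-\delta_0)^{H/2}$, AM--GM to bound it by $(1-\delta_{\mathrm{bal}})^H\le(1-\delta_{\min})^H$, a union bound over the $|\overline{\mathcal H}_{\cup}|-1$ competitors, and solving for $H$. One caveat concerns your odd-$H$ aside. With class counts $H_0\neq H_1$, weighted AM--GM gives a bound in terms of $\frac{H_0\delta_0+H_1\delta_1}{H}$ rather than $\delta_{\mathrm{bal}}$, and the $\delta_{\mathrm{bal}}$ form can genuinely fail: for example, $H_0=2$, $H_1=1$, $\delta_0=0$, $\delta_1=1/2$ gives survival probability $1/2>(3/4)^3$. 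So if $H_{\mathrm{suff}}(\eta)$ is odd, the right fix is to round it up to the next even integer, which is valid because the bound decreases in $H$.
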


\begin{proof}
For a fixed competitor $c$, the probability that it agrees with the target on all $H/2$ positive demonstrations and all $H/2$ negative demonstrations is
\[
\left(1-\delta_1(c,c^\ast)\right)^{H/2}
\left(1-\delta_0(c,c^\ast)\right)^{H/2}.
\]
By the arithmetic–geometric mean inequality,
\[
\sqrt{
\left(1-\delta_1(c,c^\ast)\right)
\left(1-\delta_0(c,c^\ast)\right)
}
\leq
1-\delta_{\mathrm{bal}}(c,c^\ast),
\]
so the competitor-survival probability is at most
\[
(1-\delta_{\mathrm{bal}}(c,c^\ast))^H
\leq
(1-\delta_{\min})^H.
\]
Applying a union bound over the $|\overline{\mathcal H}_{\cup}|-1$ behaviorally distinct competitors gives Eq.~(\ref{eq:union_bound}). Requiring the resulting upper bound to be at most $\eta$ and solving for $H$ yields the stated sufficient budget.
\end{proof}
The bound is conservative because it protects against the simultaneous survival of every behaviorally distinct competitor, even though different competitors may be eliminated by overlapping subsets of demonstrations. It therefore provides a general sufficient budget rather than an estimate of the exact identification threshold for a realized demonstration stream.

\subsection{Exact Concept Identification}\label{app:ident:concept}
\vspace{-5pt}
In our controlled environment, the behaviorally distinct hypothesis space $\overline{\mathcal H}_{\cup}$ is finite and can be exhaustively enumerated. For each target concept and realized demonstration stream, we evaluate the version space after each demonstration prefix and compute the exact identification threshold $H^{\ast}_{\mathrm{concept}}$ from Eq.~(\ref{eq:hstar}). This provides a stream--specific measure of the number of demonstrations required to uniquely identify the target concept, without relying on the conservative sufficient-budget bound.

Table~\ref{tab:hstar} summarizes the resulting thresholds and the size of the remaining version space at several demonstration checkpoints. At $H=100$, $47$ of the $48$ realized concept--stream instances are uniquely identified, corresponding to an identification rate of $97.9\%$; at $H=200$, all $48$ realized concept--stream instances are identified. The single unresolved stream at $H=100$ is an \textsc{OR} concept with $K=4$, for which exactly one behaviorally distinct competitor remains. Under the fixed-arity analysis, all $96$ concept--budget pairs are identified.

The exact enumeration confirms that the deployed demonstration budgets are sufficient for concept identification in all but one realized stream at $H=100$, and for all realized streams at $H=200$. At the same time, the variation in identification thresholds across composition forms and demonstration streams indicates that the amount of evidence required for identification is not uniform across tasks.

\begin{table}[ht]
\centering
\vspace{-10pt}
\caption{Exact concept-identification thresholds and version-space contraction over $\overline{\mathcal H}_{\cup}$. The single stream that remains unidentified at $H=100$ (\textsc{OR}, $K=4$) is excluded from the threshold statistics. For each form--arity pair, \textit{med.} and \textit{p90} report the ranges of the corresponding statistics across the independently realized demonstration streams under $H=100$ and $H=200$, while \textit{max} reports the largest observed identification threshold among the identified streams. The last three columns report the median number of additional behaviorally distinct rival hypotheses remaining after $5$, $12$, and $30$ demonstrations.}
\vspace{2pt}
\label{tab:hstar}
\begin{tabular}{llrrrrrr}
\toprule
form & $K$ & med. & p90 & max & $h=5$ & $h=12$ & $h=30$ \\
\midrule
\textsc{and}    & 3 & 14--16 & 24--29 & 43 & $1{,}013$  & 24  & 0 \\
\textsc{and}    & 4 & 15--17 & 26--30 & 31 & $2{,}091$  & 88  & 0 \\
\textsc{m-of-K} & 3 & 25     & 29--33 & 37 & $9{,}881$  & 251 & 0 \\
\textsc{m-of-K} & 4 & 26--32 & 39--46 & 56 & $41{,}537$ & 660 & 4 \\
\textsc{or}     & 3 & 16--24 & 26--35 & 48 & $82$       & 2   & 0 \\
\textsc{or}     & 4 & 31--52 & 60--72 & 76 & $95$       & 3   & 1 \\
\bottomrule
\end{tabular}
\vspace{-10pt}
\end{table}

\subsection{Query-Level Demonstration Sufficiency}\label{app:ident:query}
\vspace{-5pt}
Full concept identification is stronger than what is required to determine the label of an individual query. A version space may contain multiple behaviorally distinct concepts while all remaining concepts agree on the label of a particular input. We therefore define the number of labels supported by the version space for a query $x_q$ as
\[
    N_q(D_h,x_q)=\left|\left\{c(x_q):[c]\in\overline{\mathcal{H}}_{\cup}(D_h)\right\}\right|.
\]
The query is identified when $N_q(D_h,x_q)=1$, with exact query-level threshold
\[
H^{\ast}_{\mathrm{query}}(x_q;D)=\min\left\{h:N_q(D_h,x_q)=1\right\}.
\]
We compute this quantity for the same held-out probes used in ICL evaluation. Each concept--stream instance contains $32$ probes, yielding $1{,}536$ probes at each demonstration budget. At $H=100$, $99.9\%$ of probes are uniquely identified, with only one unresolved probe; at $H=200$, all probes are identified. Across the twelve form--arity--budget cells, the median $H^{\ast}_{\mathrm{query}}$ ranges from $7$ to $21$, compared with $14$–$52$ across the corresponding form–arity cells for concept-level identification.

Restricting the analysis to the $95$ concept--stream instances that are identified within the corresponding demonstration budget, $82.4\%$ of evaluation probes become identifiable strictly before the full concept is identified, while the remaining $17.6\%$ become identifiable only when the final behaviorally distinct rival is eliminated. Thus, query-level sufficiency is typically reached earlier than full concept identification, although some queries remain tied to the final concept-level ambiguity.

Over the full input domain, concept-level identification is equivalent to identifying the labels of every possible input:
\[
H^{\ast}_{\mathrm{concept}}=\sup_{x\in\mathcal{X}}H^{\ast}_{\mathrm{query}}(x).
\]
The equality follows because a version space is a singleton if and only if all hypotheses in it agree on every $x\in\mathcal{X}$, while any pair of behaviorally distinct hypotheses disagrees on at least one such input.

\subsection{Information Structure and Sampling Effects}\label{app:ident:info}
\vspace{-5pt}
The exact identification threshold varies across realized demonstration streams even for the same target concept. To distinguish intrinsic structure from sampling effects, we generate additional demonstration streams for each target concept and decompose the variation in $H^{\ast}_{\mathrm{concept}}$. Across these streams, between-concept variation accounts for only $4$--$28\%$ of the variance, with the remaining $72$--$96\%$ corresponding to variation across sampled demonstration streams. Thus, identification difficulty is primarily a property of the target--stream pair rather than an intrinsic property of the concept alone.

The information carried by positive and negative demonstrations also depends on the composition form. For each behaviorally distinct competitor, we record the label of the first demonstration that eliminates it from the version space. For \textsc{AND} concepts, positive examples eliminate $87.5$–$90.1\%$ of competitors first, whereas for \textsc{OR} concepts, negative examples eliminate $99.3$–$99.8\%$ first; \textsc{m-of-K} lies between these cases. This asymmetry reflects the different logical constraints imposed by the three composition forms and helps explain variation in their identification thresholds.

Finally, we verify that balanced demonstration sampling does not itself create identifiability. Replacing the balanced sampler with natural sampling from the uniform input distribution changes the median $H^{\ast}_{\mathrm{concept}}$ by at most four demonstrations and leaves identification rates unchanged at the deployed budgets. Balanced sampling therefore controls demonstration label frequency without materially altering the evidence required to identify the target concept.

These diagnostics establish the information-side conditions for interpreting the subsequent concept-recovery experiments: the deployed demonstrations uniquely identify the target concept for nearly all realized streams at $H=100$ and for all evaluated streams at $H=200$, while the amount and type of evidence required for identification vary systematically across streams and composition forms.

\section{Additional Results and Experimental Details}
\vspace{-5pt}

\subsection{Prompt Templates}\label{app:prompt}

We use the STAGGER-style synthetic environment described in Sec.~\ref{sec:measurement}. We use a modular prompt design with three base templates: \emph{implicit prediction}, \emph{structured concept recovery}, and \emph{concept application}. Unless otherwise noted, the shared components of each template remain unchanged across experimental conditions. The templates below use an \textsc{AND} instance for illustration; the composition-specific instruction is substituted for \textsc{OR} and $m$-of-$K$ conditions. All examples use the same 200 demonstrations, truncated for display. Here HURT denotes $y=0$ and HELP denotes $y=1$.

\textbf{Implicit Prediction.} The implicit-prediction template evaluates whether the model can infer the input--output relationship directly from demonstrations and predict the label of a held-out query. No concept representation is requested from the model.

\begin{promptbox}{Implicit Prediction}
\footnotesize
\textbf{SYSTEM}\vspace{2pt}\\
You predict whether an action HELPS or HURTS for an item, using ONLY the observed examples. Answer with exactly one word: HELP or HURT. \\
 
\vspace{4pt}\hrule\vspace{4pt}
\textbf{USER}\vspace{2pt}\\
Observed examples:\\
\texttt{\ \ x1=high x2=low x3=medium x4=high x5=medium x6=low x7=low -> HURT}\\
\texttt{\ \ x1=high x2=high x3=high x4=high x5=medium x6=high x7=low -> HURT}\\
\hfill $\vdots$ \hfill {\itshape (200 demonstrations total)} \hfill\hfill\\
\texttt{\ \ x1=high x2=low x3=high x4=low x5=high x6=low x7=high -> HURT}
 
\vspace{3pt}
Query: \texttt{x1=medium x2=low x3=high x4=medium x5=low x6=high x7=high} \\
Answer (HELP or HURT):
\end{promptbox}

\vspace{5pt}
\textbf{Structured Concept Recovery.} The structured-recovery template asks the model to reconstruct a concept from the demonstrations. The template contains two independently varied components: \emph{concept information}, which provides task-defining structural information when available, and \emph{recovery procedure}, which determines how the model reconstructs the concept.

\begin{promptbox}{Structured Concept Recovery}
\footnotesize
\textbf{SYSTEM}\vspace{2pt}\\
You maintain a RULE predicting HELP vs HURT from an item's features. The rule is a CONJUNCTION: an item HELPS if and only if EVERY listed feature has one of its allowed levels; features you do not list are irrelevant. Levels are only: low, medium, high.\\
Output the rule as lines exactly of the form \texttt{feature: level, level} (allowed levels for HELP). List ONLY features that matter. Output ONLY these lines and nothing else; if no feature matters yet, output the single word NONE. Change the rule as little as possible.

\textbf{Concept Information.} The concept-information slot controls which task-defining components are provided to the model in addition to the demonstrations. The baseline condition provides no additional concept information. The remaining conditions reveal selected components of the target concept.
 
\vspace{4pt}\hrule\vspace{4pt}
\textbf{USER}\vspace{2pt}\\
CURRENT RULE:\\
NONE
 
\vspace{3pt}
NEW OBSERVATIONS:\\
\texttt{\ \ x1=high x2=low x3=medium x4=high x5=medium x6=low x7=low -> HURT}\\
\hfill $\vdots$ \hfill {\itshape (200 demonstrations total)} \hfill\hfill\\
\texttt{\ \ x1=medium x2=low x3=high x4=medium x5=low x6=high x7=high -> HELP}
 
\vspace{5pt}
\slot{Concept Information (optional; $\varnothing$ by default --- nothing is inserted here unless the condition specifies otherwise)}
 
\vspace{5pt}
\slot{Recovery Procedure (Direct by default --- the line below is used as-is; CoT/Iterative replace or wrap it, see below)}\\
UPDATED RULE (feature: allowed levels for HELP):
 
\end{promptbox}

\begin{modbox}{Concept Information}
\begin{description}[leftmargin=1.3cm,style=nextline,itemsep=2pt]
\item[$\varnothing$] Nothing is inserted; this is the baseline recovery condition.
\item[$K$] The prompt specifies that exactly $K$ features are relevant and asks the model to identify which $K$ are relevant. The available features are listed as \texttt{x1, x2, \ldots, x7}.
\item[$A$] The prompt directly provides the relevant features: \texttt{RELEVANT FEATURES (use ONLY these): x1, x2, x4}.
\item[$L$] The prompt provides the level constraints associated with the target concept, without directly providing the relevant feature identities.
\item[$(A,L)$] The prompt directly provides both the relevant features and their corresponding level constraints: \texttt{x1: low, medium; x2: low, medium; x4: low, medium}.
\item[$(A,L,m)$] For $m$-of-$K$ concepts, the prompt additionally provides the composition threshold $m$.
\item[$(A,L,\tilde L)$] The prompt provides $(A,L)$ together with one additional plausible but irrelevant level constraint $\tilde L$, which does not belong to the target concept.
\end{description}
\end{modbox}

\begin{modbox}{Recovery Procedure}
\textbf{Direct} (default): no change --- the base template ends directly at ``UPDATED RULE (feature: allowed levels for HELP):''.
 
\vspace{3pt}
\textbf{CoT}: the SYSTEM prompt appends \textit{``Reason BRIEFLY (a few sentences) about the observations, THEN on a new line write \texttt{RULE:} and, on the lines AFTER it, ONE feature per line... Put NO prose, reasons, or other words inside the RULE block.''} plus a one-shot example; the USER prompt's closing line becomes \textit{``Reason briefly, then end with the RULE: block:''} in place of ``UPDATED RULE:''. Only the text after \texttt{RULE:} is parsed and scored against $c^\ast$.
 
\vspace{3pt}
\textbf{Iterative} (3 turns; Direct is reused verbatim as Turn 1): Turn 1 = the full Direct template above $\to \hat c_1$. Turn 2 (critique, new call): SYSTEM \textit{``You are given a candidate RULE and labeled examples. Check the rule against EVERY example; list which examples it misclassifies and what is inconsistent... Do NOT output a rule --- only the critique.''}, USER supplies $\hat c_1$ + the same 200 demonstrations. Turn 3 (correct, new call): SYSTEM \textit{``You are given a candidate RULE, a critique of it, and the examples. Produce the CORRECTED final rule... Output ONLY lines \texttt{feature: level, level}... or NONE.''}, USER supplies $\hat c_1$ + the Turn-2 critique + demonstrations. Only the Turn-3 output is scored.
\end{modbox}

\vspace{4pt}
\textbf{Concept Application.} The concept-application template evaluates whether the model can apply a supplied concept to a held-out query. We vary the representation of the supplied concept and, where applicable, the procedure used to apply it.

\begin{promptbox}{Concept Application}
\footnotesize
\textbf{SYSTEM}\vspace{2pt}\\
You are given a RULE (lines \texttt{feature: allowed levels}). HELP iff EVERY listed feature has one of its allowed levels. Apply it exactly as written; do not second-guess it. Answer with exactly one word: HELP or HURT.
 
\vspace{4pt}\hrule\vspace{4pt}
\textbf{USER}\vspace{2pt}\\
RULE:\\
\slot{Concept Representation --- see variants below}
 
\vspace{3pt}
Item: \texttt{x1=medium x2=low x3=high x4=medium x5=low x6=high x7=high} \\
Answer (HELP or HURT):\hfill
\end{promptbox}

\begin{modbox}{Concept Representation}
\begin{description}[leftmargin=1.6cm,style=nextline,itemsep=2pt]
\item[DSL] \texttt{x1: low, medium}\ \ \texttt{x2: low, medium}\ \ \texttt{x4: low, medium} \quad(shown in Template 3 above)
\item[Inequality] \texttt{x1<=medium AND x2<=medium AND x4<=medium}\ \ (system swaps to: \textit{``\texttt{fi>=L} means attribute $i$ is at level $L$ or higher; \texttt{fi<=L} means $L$ or lower.''})
\item[Natural language] \textit{``An item HELPS (else HURTS) if all of the following hold: $x_1$ is low or medium; $x_2$ is low or medium; and $x_4$ is low or medium.''}
\end{description}
\end{modbox}

\subsection{Level-Constraint Error Characterization}\label{app:level_error}
We further characterize the level-constraint errors underlying the $\mathrm{Acc}_{L|A}$ metric in Table~\ref{result:implicit_vs_explicit}. Let $A$ and $\hat A$ denote the ground-truth and recovered relevant-attribute sets, and let $L_j$ and $\hat L_j$ denote the ground-truth and recovered level constraints for attribute $j$. We consider only attributes that are correctly identified as relevant, i.e., $j\in A\cap\hat A$. Exact matches $\hat L_j=L_j$ are counted as correct in the main-table metric; otherwise, the mismatch is categorized by the set relation between $\hat L_j$ and $L_j$.

Specifically, \emph{Over} denotes $\hat L_j\supset L_j$, \emph{Under} denotes $\hat L_j\subset L_j$, \emph{Partial} denotes overlap without containment, i.e., $\hat L_j\cap L_j\neq\varnothing$ with neither set containing the other, and \emph{Disjoint} denotes $\hat L_j\cap L_j=\varnothing$. For example, ${\textit{low}}\rightarrow\{\textit{low},\textit{medium}\}$ is \emph{Over}, whereas $\{\textit{low},\textit{medium}\}\rightarrow\{\textit{medium},\textit{high}\}$ is \emph{Partial}.

Table~\ref{tab:level_error} reports the distribution of mismatches. \emph{Over} is the dominant error type for most model--form pairs, particularly under \textsc{OR}, while \emph{Under} is comparatively rare. Thus, incorrect level recovery is generally biased toward including additional satisfying levels.

\begin{table}[ht]
\centering
\small
\caption{Distribution of level-constraint mismatches by error type. Values are computed among attributes correctly identified as relevant but with mismatched level constraints. \emph{Over}, \emph{Under}, \emph{Partial}, and \emph{Disjoint} denote the corresponding set relations between recovered and ground-truth level constraints.}
\vspace{2pt}
\label{tab:level_error}
\begin{tabular}{llcccc}
\toprule
Form & Model
& Over
& Under
& Partial
& Disjoint \\
\midrule
\multirow{4}{*}{\textsc{AND}}
& Qwen2.5-72B   & 0.65 & 0.25 & 0.10 & 0.00 \\
& Llama-3.1-70B & 0.58 & 0.11 & 0.32 & 0.00 \\
& Gemma-2-27B   & 0.76 &  0.03 & 0.21 & 0.00 \\
& Mistral-24B   &  0.09 & 0.32 & 0.59 & 0.00 \\
\midrule
\multirow{4}{*}{\textsc{OR}}
& Qwen2.5-72B   & 0.96 &  0.01 & 0.00 & 0.03 \\
& Llama-3.1-70B & 0.93 &  0.00 & 0.01 & 0.06 \\
& Gemma-2-27B   & 0.72 &  0.00 & 0.01 & 0.27 \\
& Mistral-24B   & 0.63 &  0.02 & 0.01 & 0.33 \\
\midrule
\multirow{4}{*}{$m$-of-$K$}
& Qwen2.5-72B   & 0.59 &  0.03 & 0.27 & 0.11 \\
& Llama-3.1-70B & 0.59 &  0.04 & 0.17 & 0.21 \\
& Gemma-2-27B   & 0.48 &  0.05 & 0.24 & 0.24 \\
& Mistral-24B   & 0.32 &  0.06 & 0.21 & 0.40 \\
\bottomrule
\end{tabular}
\end{table}

\subsection{Threshold Error Characterization}\label{app:threshold_error}
\vspace{-5pt}
We further characterize the threshold errors underlying the \emph{$\mathrm{Acc}_m$} metric in Table~\ref{result:implicit_vs_explicit}. Let $m$ and $\hat m$ denote the ground-truth and recovered thresholds, respectively. We consider only $m$-of-$K$ concepts with incorrectly recovered thresholds, i.e., $\hat m\neq m$. We classify each error by direction: \emph{Too High} denotes $\hat m>m$, while \emph{Too Low} denotes $\hat m<m$. Table~\ref{tab:threshold_error} reports the resulting error distribution. Across most models, threshold errors are biased toward overestimating the number of satisfied predicates required for a positive prediction. Qwen, Gemma, and Mistral make almost exclusively \emph{Too High} errors, whereas Llama shows a more balanced pattern.

\begin{table}[ht]
\centering
\small
\caption{Distribution of threshold errors by error direction. Values are computed among $m$-of-$K$ concepts with incorrectly recovered composition thresholds. \emph{Too High} denotes $\hat m>m$, while \emph{Too Low} denotes $\hat m<m$.}
\vspace{2pt}
\label{tab:threshold_error}
\begin{tabular}{llcc}
\toprule
Form & Model
& Too High
& Too Low \\
\midrule
\multirow{4}{*}{$m$-of-$K$}
& Qwen2.5-72B   & 1.00 & 0.00 \\
& Llama-3.1-70B &  0.50 & 0.50 \\
& Gemma-2-27B   &  1.00 & 0.00\\
& Mistral-24B   &  1.00 & 0.00 \\
\bottomrule
\end{tabular}
\end{table}

\subsection{Concept Application Control}\label{app:addtion_apply}
\vspace{-5pt}
We evaluate model performance when the target concept is provided directly, without requiring concept recovery. Table~\ref{tab:apply} reports held-out query accuracy across the three concept families and multiple concept representations. Models generally achieve substantially higher accuracy when the concept is directly provided than when it must be recovered from demonstrations. The deterministic executor achieves $1.00$ throughout, confirming that the provided concepts are executable under the evaluation procedure.

This control suggests that the prediction–recovery gap is not simply attributable to difficulty in applying an explicitly specified concept. Rather, models can generally apply a provided concept with high accuracy, while recovering the underlying concept from demonstrations remains substantially more difficult.

\vspace{-10pt}
\begin{table}[ht]
\caption{Held-out query accuracy on $Q$ when the target concept $c$ is directly provided in different representations. \emph{DSL}, \emph{Inequality}, and \emph{Natural Language} express the same concept in symbolic, inequality-based, and natural-language forms, respectively. \emph{DSL + CoT} additionally provides a reasoning trace, while \emph{Executor} applies the concept using a deterministic symbolic executor.}
\vspace{2pt}
\label{tab:apply}
\centering
\small
\begin{tabular}{llccccc}
\toprule
Form & Model & DSL & Inequality & Natural & DSL + CoT & Executor \\
\midrule
\multirow{4}{*}{\textsc{AND}}
 & Qwen2.5-72B   & 0.94 & 0.97 & 0.88 & 0.98 & 1.00 \\
  & Llama-3.1-70B & 0.99 & 0.96 & 1.00 & 0.98 & 1.00 \\
 & Gemma-2-27B   & 0.90 & 0.82 & 0.87 & 0.98 & 1.00 \\
 & Mistral-24B   & 0.80 & 0.81 & 0.91 & 1.00 & 1.00 \\
\midrule
\multirow{4}{*}{\textsc{OR}}
 & Qwen2.5-72B   & 0.97 & 0.99 & 0.99 & 0.99 & 1.00 \\
& Llama-3.1-70B & 0.76 & 0.95 & 0.94 & 1.00 & 1.00 \\
 & Gemma-2-27B   & 0.94 & 0.87 & 0.92 & 0.81 & 1.00 \\
 & Mistral-24B   & 0.64 & 0.97 & 0.93 & 1.00 & 1.00 \\
\midrule
\multirow{4}{*}{$m$-of-$K$}
 & Qwen2.5-72B   & 0.79 & 0.87 & 0.81 & 0.98 & 1.00 \\
  & Llama-3.1-70B & 0.74 & 0.77 & 0.80 & 0.98 & 1.00 \\
 & Gemma-2-27B   & 0.83 & 0.83 & 0.78 & 0.86 & 1.00 \\
 & Mistral-24B   & 0.68 & 0.73 & 0.76 & 0.99 & 1.00 \\
\bottomrule
\end{tabular}
\vspace{-5pt}
\end{table}

\subsection{Additional Benchmark Datasets}\label{app:other_dataset}
\vspace{-5pt}
We further examine whether the component-level over-inclusion pattern extends beyond the controlled STAGGER-style environment. We consider two complementary settings: \textsc{MONK-1}, which has a specified target concept and relevant and irrelevant attributes~\footnote{\url{https://archive.ics.uci.edu/dataset/70/monk}}, and Mushroom, a real-world categorical classification dataset without a dataset-official ground-truth concept~\footnote{\url{https://archive.ics.uci.edu/dataset/73/mushroom}}. MONK-1 therefore permits both behavioral and feature-level recovery evaluation, whereas Mushroom provides an external setting for examining attribute selection without assuming an official ground-truth concept.

As shown in Table~\ref{tab:datasets}, recovered concepts tend to include more attributes than the available or documented relevant-attribute reference on both datasets, except for Llama on MONK-1. This pattern is consistent with the over-inclusion observed in the controlled setting.

\begin{table}[ht]
\vspace{-10pt}
\caption{Concept application and recovery on additional benchmark datasets. $Q_{\mathrm{imp}}$ denotes direct prediction from the demonstrations, while $Q_{\mathrm{exp}}$ denotes prediction obtained by executing the recovered specification on held-out queries. Since the query sets are balanced, $0.50$ corresponds to random guessing. The remaining columns report attribute-level recovery. $A$-Precision and $A$-Recall measure precision and recall of the recovered relevant-attribute set on MONK-1; Ref. $|A|$ denotes the documented reference feature count and $|\hat A|$ the average number of recovered attributes.}
\vspace{2pt}
\label{tab:datasets}
\centering
\small
\begin{tabular}{llcc|cccc}
\toprule
& & \multicolumn{2}{c|}{Concept Application}
& \multicolumn{4}{c}{Concept Recovery} \\
  Dataset   & Model
& $Q_{\mathrm{imp}}$
& $Q_{\mathrm{exp}}$
& $A$-Precision
& $A$-Recall
& Ref. $|A|$
& $|\hat A|$ \\
\midrule
\multirow{4}{*}{\textsc{MONK-1}}
& Qwen2.5-72B   & 0.81 &  0.52 & 0.43 & 0.67 & 3 & 4.6 \\
& Llama-3.1-70B & 0.81 & 0.49 & 0.70 & 0.40 & 3 & 1.8 \\
& Gemma-2-27B   & 0.65 &  0.52 & 0.39 & 0.67 & 3 & 5.0 \\
& Mistral-24B   & 0.74 & 0.48 & 0.59 & 0.73 & 3 & 3.8 \\
\midrule
\multirow{4}{*}{\textsc{Mushroom}}
& Qwen2.5-72B   & 0.96 & 0.67  & – & – & 4 & 7.8 \\
& Llama-3.1-70B & 0.73 & 0.56  & – & – & 4 & 4.4 \\
& Gemma-2-27B   & 0.76 & 0.46  & – & – & 4 & 12.0 \\
& Mistral-24B   & 0.78 & 0.50  & – & – & 4 & 13.4 \\
\bottomrule
\end{tabular}
\vspace{-5pt}
\end{table}

\subsection{Inference-Time Reasoning and Self-Refinement}\label{app:cot_selfrefine}

We further examine whether inference-time reasoning can improve recovered-concept prediction when the available structural information is incomplete. In particular, we consider two commonly used inference-time interventions: chain-of-thought reasoning~\citep{wei2022chain} and iterative self-refinement~\citep{madaan2023self}. These interventions leave the demonstrations and recovery procedure unchanged and modify only the inference process used to produce the recovered concept.

We evaluate the resulting recovered concepts by applying them to the same held-out query set $Q$ and reporting $Q_{\mathrm{exp}}$. We consider two information settings. In the first, the model receives demonstrations alone and must recover the concept without additional structural information. In the second, we provide the identities of the ground-truth relevant attributes $A$, while withholding their level constraints and the composition rule. This setting tests whether inference-time reasoning can make effective use of partial structural information.

For chain-of-thought reasoning, the model is prompted to reason about the demonstrations and explicitly derive the concept before producing the recovered specification. For iterative self-refinement, the model first produces a recovered concept, then revisits its prediction and revises the specification based on its own intermediate result. We use the same evaluation protocol and held-out queries across all inference conditions.

Table~\ref{tab:cot_selfrefine} summarizes the results. Neither chain-of-thought reasoning nor iterative self-refinement consistently improves recovered-concept prediction over direct recovery. This remains the case when the identities of the relevant attributes are provided: although some model--form combinations improve, others remain unchanged or degrade. Thus, the tested inference-time interventions do not reliably compensate for incomplete structural information.

Overall, these results provide a complementary sanity check on the structural-information analysis in Sec.~\ref{exp:localize}. The recovery gains observed when complementary structural components are jointly specified are not reproduced simply by adding inference-time reasoning or self-refinement.

\begin{table}[ht]
\vspace{-10pt}
\caption{Inference-time interventions for recovered-concept prediction. $Q_{\mathrm{exp}}$ denotes prediction accuracy on held-out queries after executing the recovered concept; $Q_{\mathrm{exp}}(A)$ additionally provides the identities of the ground-truth relevant attributes, without their level constraints or composition rule. Superscripts $\mathrm{CoT}$ and $\mathrm{Iter}$ denote chain-of-thought reasoning and iterative self-refinement, respectively.}
\vspace{2pt}
\label{tab:cot_selfrefine}
\centering
\small
\begin{tabular}{llccccccc}
\toprule
Form & Model & $Q_{\mathrm{exp}}$ & $Q_{\mathrm{exp}}^{\mathrm{CoT}}$ & $Q_{\mathrm{exp}}^{\mathrm{Iter}}$ & $Q_{\mathrm{exp}}(A)$ & $Q_{\mathrm{exp}}^{\mathrm{CoT}}(A)$ & $Q_{\mathrm{exp}}^{\mathrm{Iter}}(A)$\\
\midrule
\multirow{4}{*}{\textsc{AND}}
 & Qwen2.5-72B    & 0.63 & 0.54 & 0.56 & 0.95 & 0.81& 0.83 \\
 & Llama-3.1-70B  & 0.56 & 0.59 & 0.55 & 0.96 & 0.83& 0.94 \\
 & Gemma-2-27B    & 0.58 & 0.52 & 0.54 & 0.64 & 0.61& 0.65 \\
 & Mistral-24B    & 0.52 & 0.52 & 0.51 & 0.83 & 0.74& 0.68 \\
\midrule
\multirow{4}{*}{\textsc{OR}}
 & Qwen2.5-72B    & 0.50 & 0.53 & 0.51 & 0.52 & 0.65& 0.53 \\
 & Llama-3.1-70B  & 0.51 & 0.56 & 0.52 & 0.54 & 0.66 & 0.56\\
 & Gemma-2-27B    & 0.50 & 0.48 & 0.50 &  0.49& 0.43 & 0.49\\
 & Mistral-24B    & 0.51 & 0.56 & 0.51 & 0.52 & 0.57 & 0.56\\
\midrule
\multirow{4}{*}{$m$-of-$K$}
 & Qwen2.5-72B    & 0.53 & 0.53 & 0.54 & 0.54 & 0.50& 0.53 \\
 & Llama-3.1-70B  & 0.51 & 0.52 & 0.51 & 0.50 & 0.50 & 0.52\\
 & Gemma-2-27B    & 0.51 & 0.49 & 0.51 & 0.47 & 0.44 & 0.50\\
 & Mistral-24B    & 0.51 & 0.47 & 0.51 & 0.47 & 0.41 & 0.47\\
\bottomrule
\end{tabular}
\vspace{-10pt}
\end{table}

\subsection{Training Details}\label{app:training_details}

This appendix describes the supervised training procedure used for the results in Sec.~\ref{exp:repair}. The supervised models are trained on $m$-of-$K$ concepts; \textsc{AND} and \textsc{OR} concepts are used only for cross-family transfer evaluation.

\textbf{Training Data.} We generate training concepts using the same procedure as in the main experiments, with $K\in\{3,4\}$ and demonstration budgets $H\in\{100,200\}$. Each training concept specifies its relevant attributes, level constraints, and composition threshold $m$. We construct $3,000$ training concepts. For \emph{recovery-only} training, each concept contributes one structured-recovery example mapping demonstrations to the target concept, yielding $3,000$ training examples. For \emph{joint} training, each concept additionally contributes two balanced implicit-prediction examples, one for each label, yielding $9,000$ examples in total. Recovery and prediction examples are interleaved and jointly optimized.

\textbf{Evaluation Split.} To reduce the possibility of memorizing specific relevant-attribute combinations, held-out evaluation concepts use relevant-attribute combinations that are disjoint from those used for training. The evaluation set contains $570$ concepts, including $270$ with $K=3$ and $300$ with $K=4$.

\textbf{Training Objective.} Both structured recovery and implicit prediction are trained with the standard autoregressive next-token cross-entropy objective. We use completion-only masking: prompt tokens are excluded from the loss, which is computed only over the target completion. Recovery examples target the structured concept specification, whereas implicit-prediction examples target the corresponding binary label.

\textbf{Model and Optimization.} We fine-tune Qwen2.5-7B-Instruct with LoRA while keeping the backbone parameters frozen. LoRA uses rank $r=16$, scaling factor $\alpha=32$, and dropout $0.05$, introducing approximately $40$M trainable parameters. We use AdamW with a learning rate of $1\times10^{-4}$, cosine learning-rate decay, and a $3\%$ warmup ratio. Training uses bf16 precision.

\end{document}